\title{Sensor selection for detecting deviations from a planned itinerary}
\author{Hazhar Rahmani \qquad Dylan A.\ Shell \qquad Jason M.\ O'Kane\thanks{
H. Rahmani and J. M. O'Kane are with the Dept. of Computer Science and Engineering, University of South Carolina, Columbia, SC, USA.   D. A. Shell is with the Dept. of Computer Science and Engineering, Texas A\&M University, College Station, TX, USA.  {\tt\small hrahmani@email.sc.edu, dshell@tamu.edu, jokane@cse.sc.edu} \quad This material is based upon work supported by the NSF under Grants 1849249 \& 1849291.}}
\let\oldNP\NP
\renewcommand{\NP}{\oldNP\xspace}
\newcommand{\pagebudget}[1]{}
\newcommand{\showtotalpagebudget}[1]{}
\newtheorem{definitionenv}{\bf Definition}
\newtheorem{lemmaenv}{Lemma}
\newtheorem{theoremenv}{Theorem}
\newtheorem{corollaryenv}{Corollary}
\newtheorem{exampleenv}{Example}
\renewcommand{\emptyset}{\varnothing}
\newenvironment{definition}{\vspace{0.05in}\begin{definitionenv}\em}{\end{definitionenv}\vspace{0.05in}}
\newenvironment{lemma}{\vspace{0.05in}\begin{lemmaenv}\em}{\end{lemmaenv}\vspace{0.05in}}
\newenvironment{theorem}{\vspace{0.05in}\begin{theoremenv}\em}{\end{theoremenv}\vspace{0.05in}}
\newenvironment{corollary}{\vspace{0.05in}\begin{corollaryenv}\em}{\end{corollaryenv}\vspace{0.05in}}
\newcommand*{\probleminternal}[4]{
	\par
	\medskip
	\noindent\fbox{\parbox{0.98\columnwidth}{
			\textbf{#4: #1} \\[0.05in]
			\renewcommand{\tabcolsep}{2pt}
			\begin{tabularx}{\linewidth}{rX}
				\emph{Input:} & #2 \\
				\emph{Output:} & #3
			\end{tabularx}
		}}
		\par
		\medskip
		\par
	}
	\newcommand*{\problem}[3]{\probleminternal{#1}{#2}{#3}{Problem}}
	\newcommand*{\decproblem}[3]{\probleminternal{#1}{#2}{#3}{Decision Problem}}
	\newcommand*{\Relbarfill@}{\arrowfill@\Relbar\Relbar\Relbar}
	\newcommand*{\xeq}[2][]{\ext@arrow 0055\Relbarfill@{#1}{#2}}
\newcommand{\altered}[1]{{\color{altered}{}#1}}
\newcommand{\deleted}[1]{}
\newcommand{\walks}{\operatorname{Walks}}
\newcommand{\src}{{\rm src}}
\newcommand{\srcfunc}[1]{\src({#1})}
\newcommand{\tgt}{{\rm tgt}}
\newcommand{\tgtfunc}[1]{\tgt({#1})}
\newcommand{\pow}[1]{\ensuremath{\raisebox{.15\baselineskip}{\Large\ensuremath{\wp}}({#1})}\xspace}
\newcommand{\Yes}{{\rm Yes}\xspace}
\newcommand{\No}{{\rm No}\xspace}
\newcommand{\MSSTVS}{{\rm MSSVI}\xspace}
\newcommand{\MSSTVSDEC}{{\rm MSSVI-DEC}\xspace}
\newcommand*{\gobble}[1]{}
\newcommand*{\gobblexor}[2]{#2}
\begin{document}

\maketitle
		
\begin{abstract} 
Suppose an agent asserts that it will move through an environment in some way.
When the agent executes its motion, how does one verify the claim?  The problem arises
in a range of contexts including validating safety claims about robot
behavior, applications in security and surveillance, and for both the conception and
the (physical) design and logistics of scientific experiments.  Given a set
of feasible sensors to select from, we ask how to choose sensors 
optimally in order to ensure
that the agent's execution does indeed fit its pre-disclosed itinerary.
Our treatment is distinguished from prior work in sensor selection by two
aspects: the form the itinerary takes (a regular language of transitions) and
that families of sensor choices can be grouped as a single choice. Both are intimately tied
together, permitting construction of a product automaton because the same
physical sensors (i.e., the same choice) can appear multiple times.  This paper
establishes the hardness of sensor selection for itinerary validation within this treatment, and
proposes an exact algorithm based on an \altered{integer linear programming (ILP)} formulation that is capable of solving problem
instances of moderate size. We demonstrate its efficacy on small-scale case studies, including
one motivated by wildlife tracking.  
\end{abstract}

\section{Introduction}\pagebudget{1}
Determining how agents within an environment are behaving and understanding
that behavior, for instance by recognizing whether it fits some pattern, is
crucial to the problem of situational awareness, which broadly encompasses
agent detection and tracking, activity modeling, general sense-making, and
semantically-informed surveillance. 
It forms an important capability for
intelligent systems and is a topic of interest for the robotics community for
at least three reasons.  
First, as information consumers: such information
could enhance the ability of a robot to act within context, improving the
responsiveness and appropriateness of robot actions to other events. Secondly,
as information producers: we may wish to task a robot with providing raw sensor
information to enable coverage and facilitate such situational awareness.
The third reason is one shared of technical interest: the methods and
algorithms that enable such situational awareness have substantial overlap with
those used for estimation on-board robots and, historically, cross-pollination
between the two has been fruitful.  
The present paper fits within the vein of work concerned with guarding an
environment~\cite{orourke87art}, though is closer to the minimalist spirit
of~\cite{tovar09beams} both in terms of sensors\,---we adopt a simple model
well suited to information-impoverished sensors such as occupancy and beam sensors---\,but also in the use of combinatorial
filters\,---for fusing sequential observations and estimating state.

\begin{figure}[t]
    \centering
    \includegraphics[width=\linewidth]{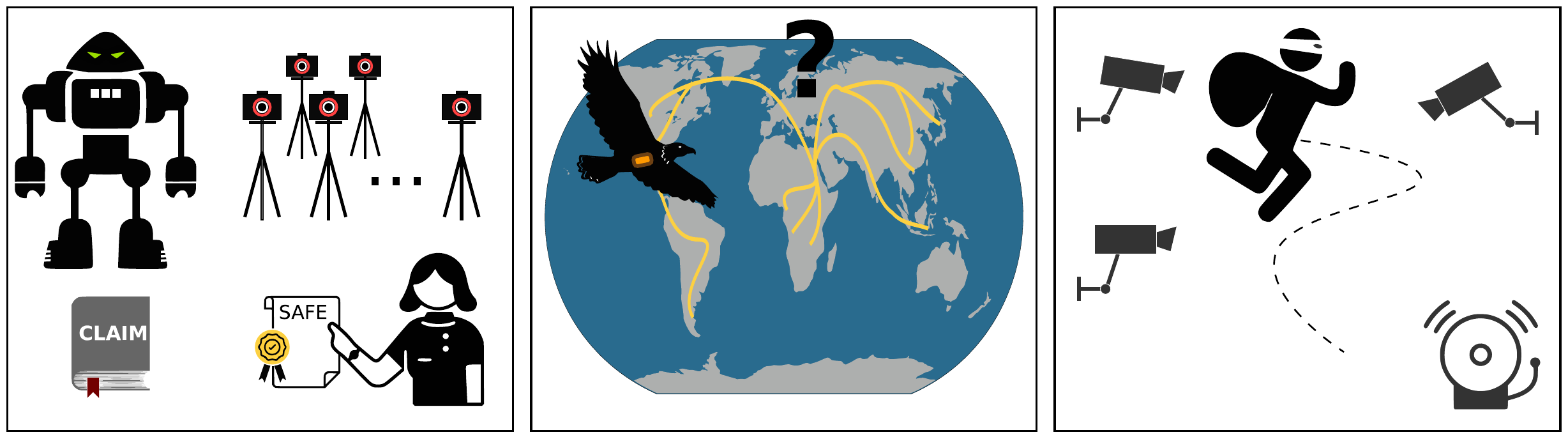}\\
\vspace*{-4pt}
\textbf{a)} \hspace*{2.2cm} \textbf{b)} \hspace*{2.2cm} \textbf{c)}\\
\vspace*{-7pt}
  \caption{
    Three quite distinct settings but which can be treated via the formulation described in this paper.
    \textbf{a)} Wishing to ensure that claims made about a particular robot's behavior will hold,  an engineer chooses to instrument her \gobblexor{laboratory and}{laboratory's} testing environment (in this case, with a set of motion tracking cameras) so she can subject it to adequate scrutiny.
    \textbf{b)} An ornithologist would like to test a new hypothesis concerning the migration patterns of a\gobble{ particular} species of bird, so assesses the cost of acquiring tracking capabilities with adequate power of discernment.
    \textbf{c)}~Rather than purchasing sensors to maximize coverage, \gobblexor{if benign activities in environment}{if benign activities} can be precisely characterized, then negating that description permits surveillance which is sufficient to identify anomalous activities but with far fewer sensors.
    \label{fig:intro}}
    \vspace{-7pt}
\end{figure}

Our work was inspired by Yu and LaValle~\cite{yu2010cyber}, who consider the
question of validating a story: given a polygonal environment, a claimant
provides a sequence of locations which they assert to have visited and the
system is tasked with determining whether a given sequence of sensor readings
is consistent with that claim.  That is, does the sensor history contain any
evidence that the given sequence of locations was in fact not visited?  ~First
in \cite{yu2010cyber}, and then, with several refinements and under weaker
assumptions, in the follow-up \cite{yu2011story}, Yu and LaValle provide an
efficient method for this problem. Their approach is sound and complete in that
it identifies inconsistencies between the story and the sensed history if and
only if such inconsistencies exist.  However, the strength of the validation
(or, more correctly, the method's \textsl{in}ability to \textsl{in}validate)
must be understood modulo sensor data. The faculty to detect contradictions
depends critically on sensor history, on the evidence that the sensors provide.
The more limited the sensing, the fewer fibs you can catch.

A natural concern, then, is how to choose sensors.  Suppose that the given
story describes a pre-declared itinerary, a future path or structured collection of possible paths through an environment.
Now, given a set of possible sensors one could deploy, when some path has been
executed, which ones suffice to detect deviations from the itinerary?  The
present paper considers an optimization variant where we ask for a minimal set of sensors that can accomplish this.
Fig.~\ref{fig:intro} illustrates the breadth of use cases for this
scenario, in which the goal is to ensure detection of all deviations. It is
important to note that with a given environment, given itinerary, and set of
sensors, this may be impossible---the whole set of sensors may be
inadequate.  A concrete sort of application, based on selection of a suite of
beam and occupancy sensors in an indoor environment, appears in~Fig.~\ref{fig:department}. 

This paper starts by formalizing the problem of optimal sensor selection to
detect deviations from a disclosed itinerary (Section~\ref{sec:defn}).  As
itineraries are claims about the future, it seems useful to permit rather more
flexibility than the stories of Yu and LaValle allow, and so our treatment
does.  Further, motivated by the physical placement of sensors which
effectively guard multiple areas at once, it models circumstances where
multiple sensors may be obtained together as a single logical unit. 
%
We then establish the computational hardness of the problem (in
Section~\ref{sec:hardness}), and show how it may be treated via integer linear programming (in Section~\ref{sec:ilp}). Experimental results in Section~\ref{sec:caseStudies} show 
that realistic size instances can be practically solved in this way.

\section{Related Work}\pagebudget{0.3}

The problem of reducing the sensor readings needed to establish some property
has been the subject of extensive study in the discrete event systems
literature (see the survey\altered{ by Sears and Rudie~\cite{sears16minimal}}).  That literature
distinguishes sensor selection (e.g.,~\cite{haji96minimizing}) from sensor
activation (e.g.,~\cite{cassez2008fault,wang2018optimizing}). The former considers, as studied in
the present paper, a one-shot decision at initialization of whether to adopt a sensor or
not; the latter is an online variant that switches sensors on/off across time.
This paper aims to fill the niche between such sensor-oriented
work and the problem of story validation (as exemplified by \altered{Yu and LaValle~\cite{yu2010cyber,yu2011story}}).

The $\NP$-completeness of minimal sensor selection for the properties of observability and
diagnosability~\cite{sampath1995diagnosability} was established
\altered{by Yoo and Lafortune~\cite{yoo02NP}}.  Recently, Yin and Lafortune~\cite{yin2019general} proposed
a general approach to optimizing sensor selection that applies to a very wide
set of problems and subsumes several previous methods. Their approach is
capable of enforcing what they term `information state-based properties,'
essentially arbitrary predicates defined on states, which allow one treat a
variety of fault detection and diagnosis tasks (including observability and
diagnosability).  Our itinerary validation problem, however, doesn't fall within
this class as it is not enough to ask whether a state is visited or not;
specific transitions between states matter too.  

\begin{figure}[t]
  \centering
  \includegraphics[width=0.47\linewidth, trim={0 1.5cm 10.7cm 0}, clip]{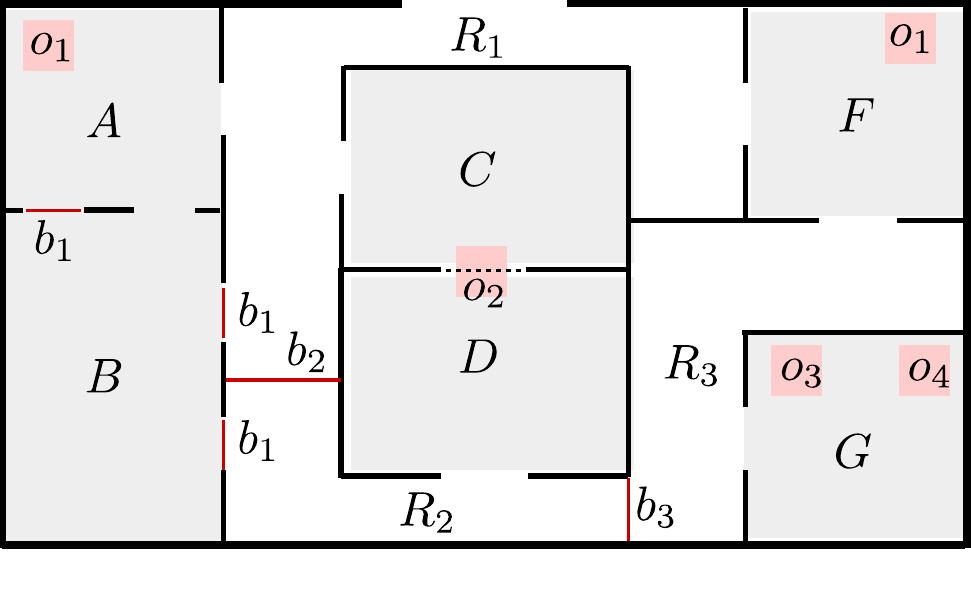}
  \includegraphics[width=0.51\linewidth, trim={10.3cm 1.3cm 0 0}, clip]{pics/department.pdf}
  \phantom{.}\\
  \vspace{-15pt}
\textbf{~a)} \hspace*{4.0cm} \textbf{b)} \\
\vspace*{-7pt}
  \caption{
    \textbf{a)} An example of an environment, which shows the floor map of a department. 
    This environment is guarded by beam sensors $b_1$, $b_2$, and $b_3$, and by occupancy sensors $o_1$, $o_2$, $o_3$, and $o_4$.
    \gobblexor{\textbf{b)} A world graph representing the environment. Note that the graph is a multigraph. Each edge $e$ is labeled by a world-observation---a set of events---the system receives when an agent moves from the region represented by the source vertex of $e$ to the region represented by the target vertex of~$e$.}{\textbf{b)} A multigraph representing the environment. Each edge $e$ is labeled by a world-observation, a set of events received when an agent moves from the region represented by the source vertex of $e$ to the region represented by the target vertex of~$e$.}
  \label{fig:department}}
\vspace*{-18pt}
\end{figure}

The characteristic that differentiates itinerary validation is that the basic
mathematical objects under consideration are trajectories.  This emphasizes an
important distinction between language- and automata-based properties
(cf. our~\cite{rahmani2020planning}), with fairly subtle implications for our
model and approach. Itineraries will be taken to describe sequences of edges and in order
to relate the world's structure to those sequences, we will form a product.
That product may partially unroll or unfold the world, so that the choice of a sensor can
affect elements less `locally' than one might naturally expect. One
implication for the model is that we directly treat situations where multiple
sensors may be selected as a single logical unit as this
occurs in the product anyway. Thence, the further implication is that our
computational complexity (hardness) result utilizes a cover selection problem
directly.

\section{Definitions and Problem Statement}\pagebudget{1.7}
\label{sec:defn}

This section formalizes our sensor selection problem.

\vspace*{-8pt}

\subsection{Modeling the environment}

        In our approach, the environment is modeled using a discrete structure, called a world graph, defined as follows.
        
        \begin{definition}[World graph]\label{def:wg}
		A \emph{world graph} is an edge-labeled directed multigraph $\mathcal{G} = (V, E, \src, \tgt, v_0, S,\mathbb{Y},\lambda)$ in which 
		\begin{itemize}
		    \item $V$ is a nonempty vertex set, 
		    \item $E$ is a set of edges,
		    \item $\src: E \rightarrow V$ and $\tgt: E \rightarrow V$ are \emph{source} and \emph{target} functions, respectively, which identify the source vertex and target vertex of each edge,
		    \item $v_0 \in V$ is an initial vertex, 
		    \item $S = \{s_1, s_2, \dots, s_k\}$ is a nonempty finite set of sensors, 
		    \item $\mathbb{Y} = \{Y_{s_1}, Y_{s_2}, \dots, Y_{s_k}\}$ is a collection of mutually disjoint event sets associated to each sensor,  and
		    \item $\lambda: E \rightarrow  \pow{Y_{s_1}\cup Y_{s_2}\cup \dots \cup Y_{s_k}} $ is a labeling function, which assigns to each edge, a \emph{world-observation}---a set of events.
		\end{itemize}
		\end{definition}
	    
        (Here $\pow{X}$ denotes the set of all subsets of $X$.)
        \smallskip

		The intuition is that a world graph describes an environment through which agent might move, along with the available sensors in the environment and also the sensor events that agent may trigger along its motion, provided those sensors are active.
		Vertices in the graph represent regions within the environment of interest.
		Edges represent feasible transitions between regions, each labeled with a set of sensor events that happen simultaneously when the system makes the transition corresponding to that edge.

        In this model, for each sensor $s_i \in S$, $Y_{s_i}$ is the set of all events produced by $s_i$. Notice that Definition~\ref{def:wg} stipulates that distinct sensors produce disjoint events, that is, for each $s_i, s_j \in S$, if $s_i \neq s_j$, then $Y_{s_i} \cap Y_{s_j} = \emptyset$.
        The labeling function $\lambda$ is used to indicate for each edge, the set of events produced by the sensors when the agent makes a transition corresponding to that edge in the environment.

\subsection{Example: Beam and occupancy sensors}        
Though the technical results to follow apply for any world graph that satisfies Definition~\ref{def:wg}, to keep the description reasonably concrete, we present examples focused upon occupancy sensors (which detect the presence of the agent in a region) and beam sensors (which detect the passage of an agent between adjacent regions).

To illustrate, consider the simple environment in Fig.~\ref{fig:department}a,
    which is guarded by four occupancy sensors $o_1$, $o_2$, $o_3$, and $o_4$ and three beam sensors $b_1$, $b_2$, and $b_3$.
%
%
Fig.~\ref{fig:department}b shows the world graph corresponding to this environment, as constructed via the algorithm of Yu and LaValle~\cite{yu2010cyber}.
Each state of this graph represents a room or a region within the environment guarded by the same set of sensors.
Each edge shows a transition between two neighboring regions. 
%

Notice that, in the example, some pairs of rooms are connected by multiple doors, each of which are guarded by different sensors. This explains why Definition~\ref{def:wg} uses a multigraph structure for world graphs.
Also note that in this environment, an agent cannot directly move between rooms $C$ and $D$ because those two rooms are separated by a window rather than a door.  

In this example, an occupancy sensor $o$, which detects the presence of an agent in a region $X$, is activated when the agent enters $X$ and is deactivated once the agent \textcolor{black}{exits} $X$.
Accordingly, each occupancy sensor $o$ in this example produces two events, $o^+$, which occurs when $o$ is activated, and $o^-$, which happens when $o$ is deactivated. 
\textcolor{black}{A beam sensor is activated when a mobile agent physically crosses (or breaks) the beam and then instantly deactivated.}
Because the agent is mobile and a beam sensor is deactivated immediately after it was activated, 
we model each beam sensor $b$ with a single event $b$ in $Y_{b}$.
\textcolor{black}{
When beam $b$ is broken, the system knows that the physical line segment was crossed, but not the direction of crossing.}
For the environment in Fig.~\ref{fig:department}:
%
%
for each of the occupancy sensors $o_i$, $i \in \{1, 2, 3, 4\}$, $Y_{o_i}=\{o_i^+, o_i^- \}$; for each of the beam sensors $b_j$, $j \in \{1, 2, 3 \}$, $Y_{b_j}=\{b_j\}$.

When an agent makes a transition between two regions, it is possible that several events happen simultaneously, and in fact, the system observes all those events at the same time.
For example, when an agent makes a transition from room $A$ to room $B$ from the left door, two events $o_1^-$ and $b_1$ happen simultaneously, and thus, edge $e_3$ in the world graph is labeled with the world-observation $\{b_1, o_1^- \}$.

Finally, notice that a world graph can readily represent scenarios in which a single sensor guards multiple transitions.
In the example, $B$ has four doors, three of which are guarded. The beams that guard those doors are assumed to be a single beam sensor $b_1 \in S$. 
When an agent crosses any of those beams, the system knows that one of them is crossed but it does not know which one it was.
Likewise, 
%
rooms $C$ and $D$ are guarded by a single occupancy sensor $o_2$, which is located on the window between those two rooms.
Thus, if an agent enters any of those two rooms, $o_2$ is activated, but by observing $o_2^+$, the system cannot tell if the agent entered room $C$ or room $D$.
%
        

%
\subsection{Itinerary DFA}
In our story validation problem, an agent takes a tour in the environment along a continuous path. This path
is represented over the world graph $\mathcal{G}$ by a \emph{walk}, which is defined as a finite sequence of edges $e_1 e_2 \cdots e_n \in E^*$ in which
$\srcfunc{e_1}=v_0$ and for each $i \in \{ 1, 2, \ldots, n-1\}$, $\tgtfunc{e_i}=\srcfunc{e_{i+1}}$.
The set of all walks over $\mathcal{G}$ ---that is, 
the set of all not-necessarily-simple paths one can take in the environment--- is denoted $\walks(\mathcal{G})$.

The agent claims that its tour will be one of those words specified by a deterministic finite automaton (DFA):
\begin{definition}[Itinerary DFA]
    An \emph{itinerary DFA} over a world graph $\mathcal{G} = (V, E, \src, \tgt, v_0, S,\mathbb{Y},\lambda)$ is a DFA $\mathcal{I} = (Q, E, \delta, q_0, F)$ in which $Q$ is a finite set of states; $E$ is the alphabet; $\delta: Q \times E \rightarrow Q$ is the transition function; $q_0$ is the initial state; and $F$ is the set of all accepting (final) states.
\end{definition}
    
For each finite word $r = e_1 e_2 \cdots e_n \in E^*$, there is a unique sequence of states $q_0 q_1 \cdots q_n$ for which $q_0$ is the initial state and for each $i \in \{0, 1, \ldots, n-1 \}$, $\delta(q_i, e_i) = q_{i+1}$.
Word $r$ is \emph{accepted} by the DFA if $q_n \in F$.
The language of $\mathcal{I}$, denoted $L(\mathcal{I})$, is the set of all finite words accepted by $\mathcal{I}$, i.e., $L(\mathcal{I})=\{ r \in E^* \mid r \ \text{is accepted by } \mathcal{I} \}$.
The robot claims its tour will be one of the words $r \in L(\mathcal{I})$.
Note that each word accepted by this DFA is a walk over the world graph,
and accordingly, the robot's itinerary not only specifies the sequence of locations the agent visits but it also identifies the specific transitions (i.e. the doors between the rooms) through which the agent \textcolor{black}{moves}.

\subsection{Itinerary validation}
%
%

%
We seek to enable a minimal subset $M \subseteq S$ of sensors such that, when the agent finishes its tour within the environment, the system can determine with full certainty whether the agent followed its itinerary or not. 
At the completion of the agent's tour, the system does not know the exact tour the agent took in the environment, instead receiving only a sequence of world-observations.
\textcolor{black}{Each item in the sequence is generated by the system when a set of sensors were activated or deactivated simultaneously as a result of the agent's moving in the environment.}
%

%
%

Let us mildly abuse notation and use $Y_{M}$ to denote the set of all events
produced by sensors in some $M \subseteq S$, i.e., $Y_{M} = \bigcup_{s \in M} Y_s$.
If, from all sensors $S$, only the sensors in $M$ are turned on, then when the agent transitions across $e$ in $\mathcal{G}$, the system receives world-observation $\lambda(e) \cap Y_{M}$. That is, when transitioning across an edge, the system observes precisely those events that are both associated with that edge and enabled by one of set $M$'s selected sensors.
Where $\lambda(e) \cap Y_{M} = \emptyset$, this must be handled slightly differently: in this case, the system produces no symbol at all (not a symbol reporting that some un-sensed event occurred---which would itself be a tacit sort of information). We make this precise next.

The agent's walk over the world graph generates a sequence of non-empty world-observations.
For a world graph  $\mathcal{G} = (V, E, \src, \tgt, v_0, S,\mathbb{Y},\lambda)$, we define function $\beta_{\mathcal{G}}: \walks(\mathcal{G}) \times \pow{S}  \rightarrow (\pow{Y_S} \setminus \emptyset)^*$ in which
    for each $r \in \walks(\mathcal{G})$ and subset of sensors \mbox{$M \subseteq S$}, $\beta_{\mathcal{G}}(r, M)$ gives the sequence of world-observations the system receives when the agent takes walk $r$ and precisely the sensors in $M$ are turned on.
    Formally, for each $r  =  e_1 e_2 \cdots e_n \in \walks(\mathcal{G})$, $\beta_{\mathcal{G}}(r, M) = z_1 z_2 \cdots z_{n}$ in which for each $i \in \{1, \ldots, n \}$, $z_i  = \lambda(e_i) \cap Y_M$ if $(\lambda(e_i) \cap Y_M) \neq \emptyset$, and $z_i = \epsilon$ otherwise, where $\epsilon$ is the (standard) \emph{empty symbol}.


Based on this function, we make a definition that formulates conditions under which a set of sensors are able to tell whether the agent adhered to its claimed itinerary or not.
\begin{definition}[Certifying Sensor Selection]
    \label{def:legSens}
    Let $M \subseteq S$ be a subset of sensors. We say $M$ \emph{certifies} itinerary $\mathcal{I}$ on world graph $\mathcal{G}$ if there exist no $r \in L(\mathcal{I}) \cap \walks(\mathcal{G})$ and $t \in \walks(\mathcal{G}) \setminus L(\mathcal{I})$ such that $\beta_{\mathcal{G}}(r, M) = \beta_{\mathcal{G}}(t, M)$.
\end{definition}

Intuitively, if $M$ is a certifying sensor selection for $\mathcal{I}$, then based on the sequence of world-observations $\beta_{\mathcal{G}}(r, M)$ the system perceives from the environment, the system can tell whether $r$ was within the claimed itinerary or not, that is, whether $r \in L(\mathcal{I})$ or not.
In fact, if for each $r \in L(\mathcal{I}) \cap \walks(\mathcal{G})$, there is no $t \in (\walks(\mathcal{G}) \setminus L(\mathcal{I}))$ such that $\beta_{\mathcal{G}}(t, M) = \beta_{\mathcal{G}}(r, M)$, then
the system can tell with full certainty if $r$ was within the claimed itinerary or not.
%
Thus, the system must choose a sensor set $M$ to turn on that is certifying for $\mathcal{I}$ on $\mathcal{G}$.
%

We formalize our minimization problem as follows.
        \problem{Minimal sensor selection to validate an itinerary (\MSSTVS)}
{A world graph $\mathcal{G} = (V, E, \src, \tgt, v_0, S,\mathbb{Y},\lambda)$ and an itinerary DFA $\mathcal{I} = (Q, V, \delta, q, F)$.}
{A minimum size certifying sensor selection \mbox{$M \subseteq S$} for $\mathcal{I}$ on $\mathcal{G}$, or `\textsc{Infeasible}' if no such certifying sensor selection exists.}
\gobblexor{We show that this problem is $\NP$-hard, and present an algorithm to solve it via integer linear programming.
But first, we present a construction for a certain type of product automaton which will be useful for those primary results.}{
Before showing this problem to be $\NP$-hard, we describe a construction that turns out to be useful in what follows.}

		\section{World graph-itinerary product automata}\pagebudget{0.5}
In this section, we describe how to use the inputs of the \MSSTVS problem to construct a product automaton that captures the interactions between a world graph and an itinerary DFA.  We use this construction for both a hardness result about \MSSTVS (in Section~\ref{sec:hardness}) and a practical solution of \MSSTVS via integer linear programming (in Section~\ref{sec:ilp}).
         This product automaton is defined as follows.
         \begin{definition}[Product automaton]
         \label{def:prodAut}
           Let  $\mathcal{G} = (V, E, \src, \tgt, v_0, S,\mathbb{Y},\lambda)$ be a world graph and $\mathcal{I} = (Q, E, \delta, q_0, F)$ be an itinerary DFA. The product automaton $\mathcal{P}_{\mathcal{G}, \mathcal{I}}$ is 
           a partial DFA $\mathcal{P}_{\mathcal{G}, \mathcal{I}} = (Q_\mathcal{P}, E, \delta_{\mathcal{P}}, q_0^{\mathcal{P}}, F_{\mathcal{P}})$ with
           \begin{itemize}
               \item $Q_{\mathcal{P}} = Q \times V$,
               \item $\delta_{\mathcal{P}}: Q_{\mathcal{P}} \times E \nrightarrow Q_{\mathcal{P}}$ is a partial function such that for each $(q, v) \in Q_{\mathcal{P}}$ and $e \in E$, $\delta_{\mathcal{P}}((q, v), e)$ is undefined if 
               $\srcfunc{e} \neq v$, otherwise, $ \delta_{\mathcal{P}}((q, v), e) = (\delta(q, e), \tgtfunc{e})$,
               \item $q_0^{\mathcal{P}}=(q_0, v_0)$, and
               \item $F_{\mathcal{P}}=F \times V$.
           \end{itemize}
         \end{definition}
         Note that the transition function of this DFA is partial. We will write $\delta_{\mathcal{P}}(p, e)=\bot$ to mean that $\delta_{\mathcal{P}}$ is undefined for $(p, e)$.
        The extended transition function $\delta_{\mathcal{P}}^*: Q_{\mathcal{P}} \times E^* \nrightarrow Q_{\mathcal{P}}$---which for each $q \in Q_{\mathcal{P}}$ and $r \in E^*$, $\delta_{\mathcal{P}}^*(q, r)$ denotes the state to which the DFA reaches by tracing $r$ from state $q$---is also partial.
        For a word $r=e_1 e_2 \cdots e_n \in E^*$, we use $\delta_{\mathcal{P}}^*(q_0^{\mathcal{P}}, r) = \bot$ to mean that this DFA crashes when it traces $r$ from the initial state, that is, there is a unique state sequence $q_0 q_1 \cdots q_k$ for some $k < n$  such that $\delta_{\mathcal{P}}(q_{i-1}, e_{i})=q_{i}$ for all $i \in \{1, 2, \ldots, k-1 \}$ but $\delta_{\mathcal{P}}(q_k, e_k)=\bot$.
        A word $r \in E^*$ is \emph{trackable} by this DFA if $\delta_{\mathcal{P}}^*(q_0^{\mathcal{P}}, r) \neq \bot$. 
        
         Our purpose in constructing this product automaton is revealed by the following result. 
         %
         %
         %
         \begin{lemma}
         \label{lem:prodAut}
         Let $\mathcal{G}$, $\mathcal{I}$, and $\mathcal{P}_{\mathcal{G}, \mathcal{I}}$ be the structures in Definition~\ref{def:prodAut}. 
         A subset $M \subseteq S$ of sensors is a certifying sensor selection for $\mathcal{I}$ if and only if for each $r, r' \in E^*$ such that $\delta_{\mathcal{P}}^*(q_0^{\mathcal{P}}, r) \in F_{\mathcal{P}}$ and 
         $\delta_{\mathcal{P}}^*(q_0^{\mathcal{P}}, r') \in Q_{\mathcal{P}} \setminus F_{\mathcal{P}}$, it holds that $\beta_{\mathcal{G}}(r, M) \neq \beta_{\mathcal{G}}(r', M)$.
         \end{lemma}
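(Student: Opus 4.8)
The plan is to show that the condition on the right-hand side of the stated equivalence is merely a restatement of Definition~\ref{def:legSens} once we translate product-automaton runs back into walks of $\mathcal{G}$ and words of $L(\mathcal{I})$. To that end I would first establish two structural facts about $\mathcal{P}_{\mathcal{G},\mathcal{I}}$. Fact~1: a word $r \in E^*$ is trackable, i.e.\ $\delta_{\mathcal{P}}^*(q_0^{\mathcal{P}}, r) \neq \bot$, if and only if $r \in \walks(\mathcal{G})$. Fact~2: if $r = e_1 e_2 \cdots e_n \in \walks(\mathcal{G})$ (interpreting the empty word as reaching $(q_0, v_0)$), then $\delta_{\mathcal{P}}^*(q_0^{\mathcal{P}}, r) = \bigl(\delta^\ast(q_0, r),\, \tgtfunc{e_n}\bigr)$, where $\delta^\ast(q_0, r)$ denotes the unique state $\mathcal{I}$ reaches by reading $r$ from $q_0$; consequently, since $F_{\mathcal{P}} = F \times V$, we get $\delta_{\mathcal{P}}^*(q_0^{\mathcal{P}}, r) \in F_{\mathcal{P}}$ if and only if $r \in L(\mathcal{I})$.

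Both facts are proved by induction on $|r|$, and both lean on the observation that $\mathcal{I}$ is a \emph{total} DFA, so the sole source of partiality in $\delta_{\mathcal{P}}$ is the world-graph coordinate, which is undefined on $((q,v),e)$ exactly when $\srcfunc{e} \neq v$. For Fact~1 the base case $r = \epsilon$ holds because $(q_0, v_0) \in Q_{\mathcal{P}}$; in the inductive step, a trackable prefix $r'$ reaches a state whose world-coordinate is $v_0$ (if $r' = \epsilon$) or the target of $r'$'s last edge, and appending $e$ stays trackable precisely when $\srcfunc{e}$ matches that coordinate --- which is exactly the incidence condition in the definition of a walk. Fact~2 is then a straightforward unrolling: the first coordinate of $\delta_{\mathcal{P}}$ simply applies $\delta$ at each step and the second coordinate copies $\tgt$ of the edge just read, so membership in $F_{\mathcal{P}} = F \times V$ is governed solely by whether the $Q$-coordinate lies in $F$.

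Putting the pieces together: $\delta_{\mathcal{P}}^*(q_0^{\mathcal{P}}, r) \in F_{\mathcal{P}}$ iff $r \in \walks(\mathcal{G}) \cap L(\mathcal{I})$, and $\delta_{\mathcal{P}}^*(q_0^{\mathcal{P}}, r') \in Q_{\mathcal{P}} \setminus F_{\mathcal{P}}$ iff $r' \in \walks(\mathcal{G}) \setminus L(\mathcal{I})$ (here $Q_{\mathcal{P}} \setminus F_{\mathcal{P}} = (Q \setminus F) \times V$, and membership again forces $\delta_{\mathcal{P}}^* \neq \bot$, hence $r' \in \walks(\mathcal{G})$). In particular these two conditions are exactly what make $\beta_{\mathcal{G}}(r, M)$ and $\beta_{\mathcal{G}}(r', M)$ well-defined, since $\beta_{\mathcal{G}}$ takes walks as its first argument. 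Therefore ``$\beta_{\mathcal{G}}(r, M) \neq \beta_{\mathcal{G}}(r', M)$ for every $r, r'$ meeting the product conditions'' is logically identical to ``there are no $r \in L(\mathcal{I}) \cap \walks(\mathcal{G})$ and $t \in \walks(\mathcal{G}) \setminus L(\mathcal{I})$ with $\beta_{\mathcal{G}}(r, M) = \beta_{\mathcal{G}}(t, M)$'', i.e.\ to $M$ certifying $\mathcal{I}$ on $\mathcal{G}$; the biconditional then holds in both directions at once.

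I do not anticipate a genuine obstacle --- the argument is essentially bookkeeping --- but the step needing the most care is the interface in Fact~1 between the ``crash'' semantics of the partial extended transition function ($\delta_{\mathcal{P}}^*(q_0^{\mathcal{P}}, r) = \bot$) and the combinatorial definition of a walk, getting the empty-word and single-edge base cases right, and being explicit that $\beta_{\mathcal{G}}$ is only defined on walks so that the product-automaton conditions are precisely what licenses writing $\beta_{\mathcal{G}}(r, M)$ and $\beta_{\mathcal{G}}(r', M)$ in the first place.
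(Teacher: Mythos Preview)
Your proposal is correct and follows essentially the same route as the paper: it establishes (i) that trackable words in $\mathcal{P}_{\mathcal{G},\mathcal{I}}$ coincide with $\walks(\mathcal{G})$ and (ii) that acceptance in $\mathcal{P}_{\mathcal{G},\mathcal{I}}$ coincides with membership in $L(\mathcal{I})$, and then reads the biconditional off directly from Definition~\ref{def:legSens}. Your write-up is more explicit than the paper's about the induction, the totality of $\mathcal{I}$ as the source of partiality, and the domain of $\beta_{\mathcal{G}}$, but the underlying argument is the same.
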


         \begin{proof}
                \gobblexor{From the construction of $\mathcal{P}$, two observations may readily be made.}{The construction yields two direct observations:}
                \begin{enumerate}
                    \item[(1)] Every word trackable by $\mathcal{P}$ is a walk over $\mathcal{G}$ and vice versa, i.e., $\{r \in E^* \mid \delta_{\mathcal{P}}^*(q_0^{\mathcal{P}}, r) \neq \bot \} = \walks(\mathcal{G})$.
                    \item[(2)] The accepting states of $\mathcal{P}$ correspond to the accepting states of the itinerary DFA, so $L(\mathcal{P}_{\mathcal{G}, \mathcal{I}}) = L(\mathcal{I})$.
                 \end{enumerate}
                 Taken together, (1) and (2) imply that 
                 $\walks(\mathcal{G}) \setminus L(\mathcal{I}) = \{r \in E^* \mid \delta_{\mathcal{P}}^*(q_0^{\mathcal{P}}, r) \in Q_{\mathcal{P}} \setminus F_{\mathcal{P}} \}$.
                 This means that each walk over the world graph that is not within the language of the itinerary DFA, reaches a non-accepting state in this DFA.
                 But, because every word in the the language of the itinerary DFA reaches to an accepting state in this DFA,
                 if there are two words $r$ and $r'$ such that $r$ reaches to an accepting state, $r'$ reaches to a non-accepting state, and $r$ and $r'$ both yield the same sequence of
                 world-observations by $\beta_{\mathcal{G}}$ under $M$, i.e., $\beta_{\mathcal{G}}(r, M)=\beta_{\mathcal{G}}(r', M)$, then $M$ is not a certifying sensor selection for $\mathcal{I}$.  Contrariwise, if there are no such words $r$ and $r'$, then $M$ is a certifying sensor selection.
                 This completes the proof.
         \end{proof}

        As a result, given a sensor selection $M$ as a feasible solution to \MSSTVS with inputs $\mathcal{G}$ and $\mathcal{I}$, one can use the product automaton $\mathcal{P}_{\mathcal{G}, \mathcal{I}}$ to check  
        if $M$ is a certifying sensor selection for $\mathcal{I}$ or not by testing whether such $r$ and $r'$ described in the proof of this lemma can be found or not.
        The next section makes this idea clear.
        %
        

        \section{Hardness of \MSSTVS}\label{sec:hardness} \pagebudget{1.5}
        Next, we present a hardness result for minimal sensor selection, starting by casting \MSSTVS as a decision problem.
        \decproblem{Minimal sensor selection to validate an itinerary (\MSSTVSDEC)}
		{A world graph $\mathcal{G} = (V, E, \src, \tgt, v_0, S,\mathbb{Y},\lambda)$, an itinerary DFA $\mathcal{I} = (Q, V, \delta, q_0, F)$, and integer~$k$.}
		{$\Yes$ if there is a certifying sensor selection $M \subseteq S$ such that $|M| \leq k$; $\No$ otherwise.
		}
		We prove that \MSSTVSDEC is \NP-complete, by showing that it is both in \NP and \NP-hard.  First, we show that \MSSTVSDEC can be verified in polynomial time.
		\begin{lemma}
		\label{lem:NP}
        \MSSTVSDEC $\in \NP$.
		\end{lemma}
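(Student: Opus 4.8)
The plan is to show membership in \NP the usual way: exhibit a polynomially-bounded certificate and a polynomial-time verifier for it. For a \Yes instance, the certificate is a certifying sensor selection $M \subseteq S$ with $|M| \le k$; since $M \subseteq S$ and $S$ is part of the input, the certificate has size polynomial in the input. It remains to check in polynomial time that the given $M$ really is a certifying sensor selection for $\mathcal{I}$ on $\mathcal{G}$. Here I would lean on Lemma~\ref{lem:prodAut}: $M$ is certifying if and only if the product automaton $\mathcal{P}_{\mathcal{G}, \mathcal{I}}$ admits no pair $r, r' \in E^*$ with $\delta_{\mathcal{P}}^*(q_0^{\mathcal{P}}, r) \in F_{\mathcal{P}}$, $\delta_{\mathcal{P}}^*(q_0^{\mathcal{P}}, r') \in Q_{\mathcal{P}} \setminus F_{\mathcal{P}}$, and $\beta_{\mathcal{G}}(r, M) = \beta_{\mathcal{G}}(r', M)$. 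So verification reduces to deciding whether such a ``bad pair'' of words exists.

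To decide that, I would build an auxiliary directed graph $H$ on vertex set $Q_{\mathcal{P}} \times Q_{\mathcal{P}}$ that tracks two trackable words in lockstep on their $M$-observations. Call an edge $e \in E$ \emph{silent} (under $M$) if $\lambda(e) \cap Y_M = \emptyset$ and \emph{observable} otherwise. From a vertex $(p_1, p_2)$ of $H$, add an arc to: (i) $(\delta_{\mathcal{P}}(p_1, e), p_2)$ for every silent $e$ with $\delta_{\mathcal{P}}(p_1, e) \neq \bot$; (ii) symmetrically $(p_1, \delta_{\mathcal{P}}(p_2, e))$ for every silent $e$ with $\delta_{\mathcal{P}}(p_2, e) \neq \bot$; and (iii) $(\delta_{\mathcal{P}}(p_1, e), \delta_{\mathcal{P}}(p_2, e'))$ for every pair of observable edges $e, e'$ with $\lambda(e) \cap Y_M = \lambda(e') \cap Y_M$ and both transitions defined. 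A straightforward argument---flushing the silent edges of each word between consecutive observable symbols---shows that a bad pair of words exists if and only if $H$ has a path from $(q_0^{\mathcal{P}}, q_0^{\mathcal{P}})$ to some $(p_1, p_2)$ with $p_1 \in F_{\mathcal{P}}$ and $p_2 \in Q_{\mathcal{P}} \setminus F_{\mathcal{P}}$.

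The verifier then computes $Y_M$, the sets $\lambda(e) \cap Y_M$ for all $e$, the product $\mathcal{P}_{\mathcal{G}, \mathcal{I}}$ (which has $|Q| \cdot |V|$ states), and the graph $H$ (with $O(|Q|^2 |V|^2)$ vertices and $O(|Q|^2 |V|^2 |E|^2)$ arcs), and runs a single reachability search from $(q_0^{\mathcal{P}}, q_0^{\mathcal{P}})$, accepting the certificate iff no ``bad'' vertex is reachable. Every step here is polynomial in the input size, so \MSSTVSDEC $\in \NP$. (Nothing special is needed for \No instances, including the infeasible case where even $S$ fails to certify: there is simply no accepting certificate, since every $M$ with $|M| \le k$ is rejected by the verifier.)

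I expect the one point needing care---more a subtlety than an obstacle---to be the handling of the empty symbol $\epsilon$: a transition across a silent edge emits nothing, so the two tracked words advance asynchronously, which is precisely why $H$ must include the one-sided moves (i) and (ii) alongside the synchronized move (iii). The place to be careful is the equivalence in the flushing argument, specifically the claim that whenever both words are next poised on observable edges those edges must carry the same $M$-observation; once that is nailed down, the rest is routine bookkeeping and standard graph search.
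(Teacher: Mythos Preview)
Your proposal is correct and follows essentially the same approach as the paper: both take $M$ as the certificate, invoke Lemma~\ref{lem:prodAut}, and verify by computing reachability on pairs in $Q_{\mathcal{P}} \times Q_{\mathcal{P}}$ under $M$-observation-consistent moves (synchronized steps when observations match, one-sided steps on silent edges). The paper phrases this as a fixed-point computation of a relation $R$ rather than an explicit graph search on your $H$, but the two constructions are the same.
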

		\begin{proof}
		        We need to show that, using a given sensor selection $M \subseteq S$ as a certificate, we can verify in polynomial time both (1) whether $|M| \leq k$ and (2) whether $M$ is a certifying sensor selection
		        in the sense of Definition~\ref{def:legSens}
		        or not.
		        As (1) is trivially verifiable, we turn to (2).

		        Recall that by Lemma~\ref{lem:prodAut}, if for any words $r, r' \in E^*$ for which $\delta_{\mathcal{P}}^*(v_0, r) \in F_{\mathcal{P}}$ and $\delta_{\mathcal{P}}^*(v_0, r') \in Q_{\mathcal{P}} \setminus F_{\mathcal{P}}$, it holds that $\beta_{\mathcal{G}}(r, M) \neq \beta_{\mathcal{G}}(r', M)$, then $M$ is a certifying sensor selection for the given itinerary DFA $\mathcal{I}$, otherwise $M$ is not a certifying sensor selection for $\mathcal{I}$.
		        Thus, to check if $M$ is certifying, we compute, using a fixed point algorithm described presently, a relation $R$ that relates all pairs of states $q$ and $p$ that are reachable from the initial state by two words (walks) that yield the same sequence of world-observations by $\beta_{\mathcal{G}}$ under $M$.  Then we check whether $R$ relates any pairs of states $q$ and $p$ such that one of them is accepting while the other is non-accepting. 
		        If $R$ relates such a pair, then $M$ is not a certifying sensor selection for $\mathcal{I}$. Otherwise, $M$ is certifying for $\mathcal{I}$.
		        %
		        %
		        %
		        To construct $R$, begin with $R$ initially assigned to $\{(q_0^{\mathcal{P}}, q_0^{\mathcal{P}}) \}$.  Then iteratively update $R$ according to the 
		        following equation until the iteration reaches a fixed point, with no additional tuples added to $R$.
		         \begin{equation*}
		         \begin{split}
		             R \leftarrow R \cup \bigcup_{(q, p) \in R}  \left( 
                     \raisebox{4ex}{$
                     \bigcup\limits_{\substack{e, d \in E: \\ (\lambda(e)\cap Y_M) = \\ (\lambda(d)\cap Y_M) \\ \text{ and } \delta_{\mathcal{P}}(q, e) \neq \bot \\
		             \text{ and } \delta_{\mathcal{P}}(p, d) \neq \bot}} 
                     (\delta_{\mathcal{P}}(q, e), \delta_{\mathcal{P}}(p, d))
                     $}
                     \right) \\
		             \cup \bigcup_{(q, p) \in R} \left( 
                     \raisebox{2.25ex}{$
                     \bigcup\limits_{\substack{e \in E: \\ (\lambda(e)\cap Y_M) = \epsilon \\ \text{ and } \delta_{\mathcal{P}}(q, e) \neq \bot }} (\delta_{\mathcal{P}}(q, e), p)
                     $} 
                     \right)\!\!.
		         \end{split}
		         \end{equation*}
		       To expand $R$, this equation uses two rules, one in the first line and the other in the second line.
		       Fig.~\ref{fig:mp_edge_constraints} illustrates those two rules.
		       The first rule states that if $(q, p) \in R$, then for any edges $e, d \in E$ such that $q$ has an outgoing transition for $e$ and $p$ has an outgoing transition for $d$, if $e$ and $d$ yield the same world-observation under $M$, then we must add $(\delta_{\mathcal{P}}(q), \delta_{\mathcal{P}}(p))$ to $R$ as well, which means that 
		       states $\delta_{\mathcal{P}}(q, e)$ and $\delta_{\mathcal{P}}(p, d)$ are reachable from the initial state of $\mathcal{P}_{\mathcal{G}, \mathcal{I}}$ by two words (walks) that yield the same world observation by $\beta_{\mathcal{G}}$ under $M$.
		       The second rule enforces that if  $(q, p) \in R$, then for any edge $e \in E$ such that $q$ has an outgoing transition for $e$, if $e$ yields the empty world-observation under $M$ (that is, if $e$ yields $\epsilon$ by $\beta_{\mathcal{G}}$ under $M$), then $(\delta_{\mathcal{P}}(q, e), p)$ must be added to $R$ too. This is because here both states $\delta_{\mathcal{P}}(q, e)$ and $p$ are reachable from the initial state, respectively, by a pair of words (walks) $r$ and $r'$ that, respectively, reached $q$ and $p$, while yielding a single sequence of world-observations.
               Using an appropriate implementation, this algorithm takes time which is polynomial in the size of $\mathcal{P}$ because there are $\mathcal{O}(|Q_{\mathcal{P}}|^2)$ pairs in $R$ and thus $\mathcal{O}(|Q_{\mathcal{P}}|^2)$ stages of updates, each checking at most $|E|$ edges. 
		       This shows that \MSSTVSDEC $ \in \NP$.
		\end{proof}

        The practical import of this lemma is that, in polynomial time, we can decide whether a sensor set is certifying for a given itinerary or not.
\begin{figure}[t]
  \centering
  \includegraphics[width=\linewidth]{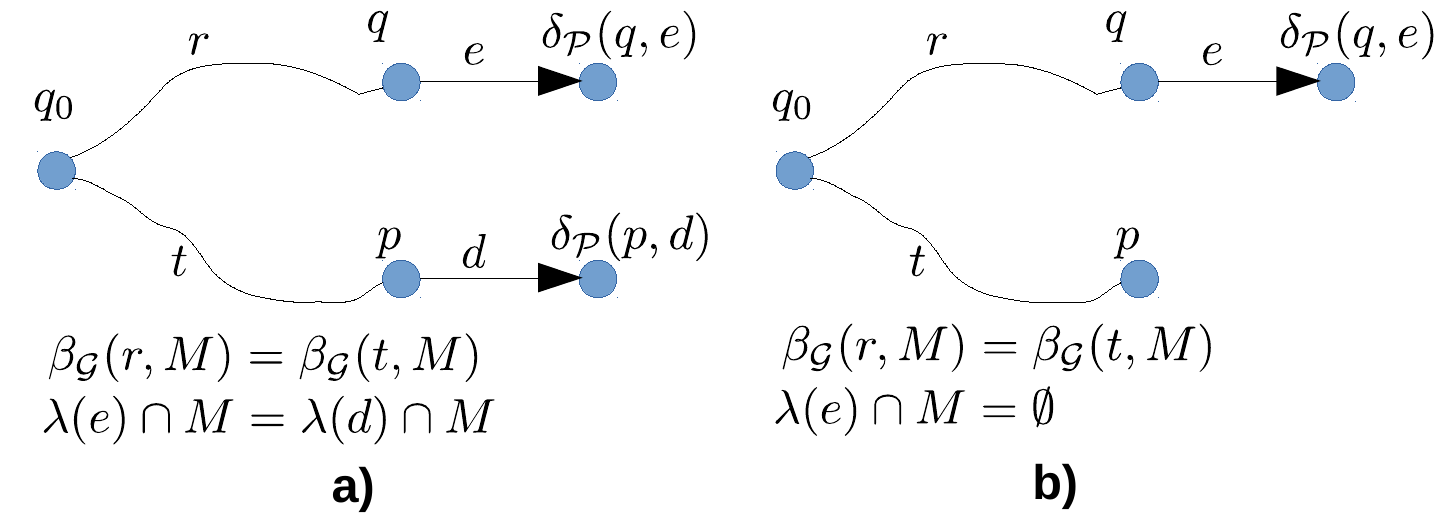}
  \vspace*{-16pt}
  \caption{
    Two cases in the construction of the relation $R$ in the proof of Lemma~\ref{lem:NP}. In each case, $R$ is expanded, given that states $q$ and $p$ are already related by $R$.
    %
    \textbf{a)} In this case, pair $\delta_{\mathcal{P}}(q, e)$ and $\delta_{\mathcal{P}}(p, d)$ are added to $R$.
    \textbf{b)} In this case, pair $\delta_{\mathcal{P}}(q, e)$ and $p$ are added to $R$.
    }
  \label{fig:mp_edge_constraints}
  \vspace*{-12pt}
\end{figure}
        
        Next, we prove that \MSSTVSDEC is computationally hard.
        To do so, we shall reduce from a well-known problem.

        \decproblem{Set cover (SETCOVER-DEC)}
		{A finite set $U$, called the universe, a collection of subsets $\mathbf{O} = \{ O_1, O_2, \ldots, O_m \}$ in which, for each $i \in \{1, 2, \ldots, m \}$, $O_i \subseteq U$ and  $\bigcup_{i \in \{1, 2, \ldots, m \}} O_i = U$, and an integer $k$.}
		{$\Yes$ if there is a sub-collection $\mathbf{N} \subseteq \mathbf{O}$ such that 
		$\bigcup_{O \in \mathbf{N}} O = U$ and 
		$|\mathbf{N}| \leq k$, and $\No$ otherwise.
		}
        %
        %
        This problem is known to be $\NP$-complete~\cite{karp1972reducibility}.
        %
        We reduce from SETCOVER-DEC to prove the following result.
        \begin{theorem}\label{thm:nphard}
 \MSSTVSDEC  is $\NP$-hard.
        \end{theorem}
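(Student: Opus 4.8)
The plan is to reduce from SETCOVER-DEC. Given an instance consisting of a universe $U = \{u_1, \ldots, u_n\}$, a collection $\mathbf{O} = \{O_1, \ldots, O_m\}$ of subsets of $U$ with $\bigcup_i O_i = U$, and a budget $k$, I would build a tiny world graph $\mathcal{G}$ with exactly two vertices: the initial vertex $v_0$ and a sink $w$ with no outgoing edges. For each element $u_j$ I add a pair of parallel edges $e_j$ and $e_j'$, both directed from $v_0$ to $w$. I introduce one sensor $s_i$ per subset $O_i$, each carrying a single private event, $Y_{s_i} = \{\sigma_i\}$, so that $\mathbb{Y}$ is trivially mutually disjoint. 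I label the ``good'' edge by $\lambda(e_j) = \{\sigma_i \mid u_j \in O_i\}$ and the ``bad'' edge by $\lambda(e_j') = \emptyset$. The itinerary DFA $\mathcal{I}$ has three states and accepts exactly $\{e_1, \ldots, e_n\}$ (the single-edge walks using good edges), rejecting everything else: the non-accepting initial state sends each $e_j$ to a lone accepting state and each $e_j'$ to a dead state, the accepting state sends every symbol to the dead state, and the dead state self-loops. Finally I set the \MSSTVSDEC budget equal to $k$. Since $w$ is a sink, $\walks(\mathcal{G})$ is precisely $\{e_1, \ldots, e_n, e_1', \ldots, e_n'\}$ together with the empty walk, hence $\walks(\mathcal{G}) \setminus L(\mathcal{I})$ is the set of primed edges together with the empty walk; the whole construction is plainly polynomial-time.

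The correctness argument rests on one short computation about $\beta_{\mathcal{G}}$. For a sensor selection $M \subseteq S$ put $\mathbf{N}_M = \{O_i \mid s_i \in M\}$. Because the events $\sigma_i$ are private, $\lambda(e_j) \cap Y_M \neq \emptyset$ holds exactly when some $O_i \in \mathbf{N}_M$ contains $u_j$, whereas $\lambda(e_j') \cap Y_M = \emptyset$ always. Consequently $\beta_{\mathcal{G}}(e_j', M)$ and $\beta_{\mathcal{G}}$ of the empty walk are always the empty sequence, while $\beta_{\mathcal{G}}(e_j, M)$ is the empty sequence if $\mathbf{N}_M$ misses $u_j$ and a length-one nonempty observation otherwise. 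Applying Definition~\ref{def:legSens} and ranging over the finitely many walk pairs, $M$ certifies $\mathcal{I}$ on $\mathcal{G}$ if and only if no accepted walk collides with a rejected one, which after this case analysis collapses exactly to the condition that $\mathbf{N}_M$ covers every $u_j$. Since $|M| \geq |\mathbf{N}_M|$, a certifying selection of size at most $k$ exists iff a set cover of size at most $k$ exists, which is the reduction; I would also remark that $\bigcup_i O_i = U$ makes the all-sensors selection certifying, so the constructed instance is never ``\textsc{Infeasible}.''

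The one genuinely delicate point — and the step I expect to need the most care — is making the encoding of indistinguishability exact in both directions simultaneously. I need $(e_j, e_j')$ to be indistinguishable under $M$ precisely when $M$ selects no sensor covering $u_j$, and at the same time I must rule out spurious collisions among the cross pairs $(e_j, e_{j'}')$ with $j \neq j'$ and against the empty walk; this is exactly what forces labeling every bad edge with $\emptyset$, restricting each good edge's label to events of its covering sensors, and using the two-vertex, no-continuation gadget so that $\walks(\mathcal{G})$ stays finite and trivial to enumerate. Once that gadget is fixed, the rest is bookkeeping, and combined with Lemma~\ref{lem:NP} this reduction also yields that \MSSTVSDEC is $\NP$-complete.
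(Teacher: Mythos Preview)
Your reduction is correct. The two-vertex gadget with parallel good/bad edges cleanly encodes each universe element, and your case analysis of $\beta_{\mathcal{G}}$ is accurate: since every rejected walk (the empty walk and every $e_j'$) yields the empty observation sequence under any $M$, certification reduces exactly to the demand that each $e_j$ produce a nonempty observation, i.e., that $\mathbf{N}_M$ cover $u_j$. The bijection between sensors and subsets makes the budget translate without offset.

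This is a genuinely different, and more economical, route than the paper's. The paper builds a ``corridor plus rooms'' world graph on $2n+2$ vertices with $4n+2$ edges, introduces $n+1$ beam sensors along the corridor together with $m$ occupancy sensors (one per subset $O_i$), takes the itinerary to be the single walk that visits every room in order, and sets the budget to $k' = k + n + 1$. Their correctness argument has an extra layer: one must first argue that every certifying selection contains all $n+1$ beam sensors (otherwise corridor backtracking is undetectable), after which the remaining at most $k$ occupancy sensors must cover every room. Your construction dispenses with the mandatory-sensor padding and the associated additive shift in the budget, and keeps $\walks(\mathcal{G})$ finite so the verification is a direct enumeration rather than an appeal to structural reasoning about long walks. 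What the paper's construction buys in exchange is that it is realizable as a concrete floor plan with standard beam and occupancy sensors (as in their running examples), so it shows hardness persists even for ``physically motivated'' instances; your gadget is more abstract but tighter, and it makes the set-cover correspondence transparent.
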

        
        \begin{proof}
            We prove this result by a polynomial reduction from SETCOVER-DEC to \MSSTVSDEC.
            %
            %
            %
            Given a SETCOVER-DEC instance $$x = \langle U=\{u_1, u_2, \ldots, u_n\}, \textbf{O}=\{O_1, O_2, \ldots, O_m\}, k \rangle,$$ construct an \MSSTVSDEC instance $$f(x) = \langle \mathcal{G} = (V, E, \src, \tgt, v_0, S,\mathbb{Y},\lambda), \mathcal{I}, k' \rangle, $$ 
            as illustrated in Fig.~\ref{fig:reduction} and detailed below.
                \begin{list}{--}{
                    \setlength{\leftmargin}{7pt}
                    \setlength{\itemindent}{0pt}
                    \setlength{\labelsep}{2pt}
                    \setlength{\labelwidth}{7pt}
                    \setlength{\itemsep}{2pt}
                } 
                    \item For the vertices of the world graph, create $2n+2$ states, denoted $V = \{C_0, C_1, \ldots, C_{n+1} \} \cup \{u_1, u_2, \ldots, u_n \}$.  Note that the $u_i$ elements correspond directly to the elements of the universe in the SETCOVER-DEC instance. The idea is that the $u_i$'s represent rooms arranged in sequence, each accessible from a shared corridor composed of the $C_i$'s.
                    
                    \item For the edges of the world graph, create $4n+2$ edges, denoted $E = \{e_0, e_1, \ldots, e_n \} \cup \{e_0', e_1', \ldots, e_n'\} \cup \{d_1, d_2, \ldots, d_n \} \cup \{d_1', d_2', \ldots, d_n' \}$. The $\src$ and $\tgt$ functions are defined so that each $e_i$ connects $C_i$ to $C_{i+1}$, 
                    each $e_i'$ connects $C_{i+1}$ to $C_i$, each $d_i$ connects $C_i$ to $u_i$, and each $d_i'$ connects $u_i$ to $C_i$.
                    
                    
                    \item Create a set of $n+m+1$ sensors, $S = \{b_1, b_2, \ldots, b_{n+1} \} \cup \{O_1, O_2, \ldots, O_m \}$, in which the $b_i$'s are beam sensors and the $O_i$'s are occupancy sensors.
                    The event set corresponds to these sensors in the usual way, with one event for each beam sensor and two events for each occupancy sensor, so
                    for each $j \in \{1, 2, \ldots, n+1 \}$, $Y_{b_j} = \{ b_j\}$, and for each $i \in \{1, 2, \ldots, m\}$, $Y_{O_i} = \{ O_i^+, O_i^-\}$, and then, 
                    $\mathbb{Y} = \{Y_{b_0}, Y_{b_1}, \dots, Y_{d_{n+1}}, Y_{O_1}, Y_{O_2}, \dots, Y_{O_m}\}$.

                    \item For the events labeling each edge, define  $\lambda(e_i)=\lambda(e_i')=b_{i+1}$ for the $e_i$ and $e_i'$ edges, $\lambda(d_i)=\{O_j^+ \mid u_i \in O_j \}$ for the $d_i$ edges, $\lambda(d_i')=\{O_j^- \mid u_i \in O_j \}$ for the $d_i'$ edges.  This models one or more occupancy sensors in each of the $u_i$ rooms, according to the subsets within the SETCOVER-DEC instance, and beam sensors along the corridor between each room.
                    
                    \item For the itinerary DFA $\mathcal{I}$, construct a DFA accepting the singleton language
                    $L(\mathcal{I})=\{e_0 d_1 d_1' e_1 d_2 d_2' e_2 \ldots d_n d_n' e_n\}$ as a linear chain of states.
                    
                    \item For the bound on the number of sensors allowed, choose $k' = k + n + 1$.
                \end{list}        
                In the \MSSTVSDEC instance constructed in this way, notice that, for each subset $O \in \mathbf{O}$, the construction makes a corresponding occupancy sensor $O$ and puts that sensor in all rooms $u$ for which $u \in O$.
                Moreover, the itinerary specifies a single walk $e_0 d_1 d_1' e_1 d_2 d_2' e_2 \cdots d_n d_n' e_n$, which indicates that the sequence of regions the agent intends to visit is $C_0 C_1 u_1 C_1 C_2 u_2 C_2 \cdots C_{n+1}$.
                That is, the itinerary calls for the agent to travel down the corridor, visiting each room exactly once in the specific order $u_1, u_2, \ldots, u_n$, without backtracking within the corridor.
                For the system to be able to tell that the agent has visited a room, at least one occupancy sensor in each room must be turned on. 
                Also, each of the beam sensors must be turned on so that the system can assure that the agent did not vacillate back-and-forth between cells in the corridor.
                %
                %
                %

\begin{figure}[t]
  \vspace*{-4pt}
  \centering
  \includegraphics[width=\linewidth]{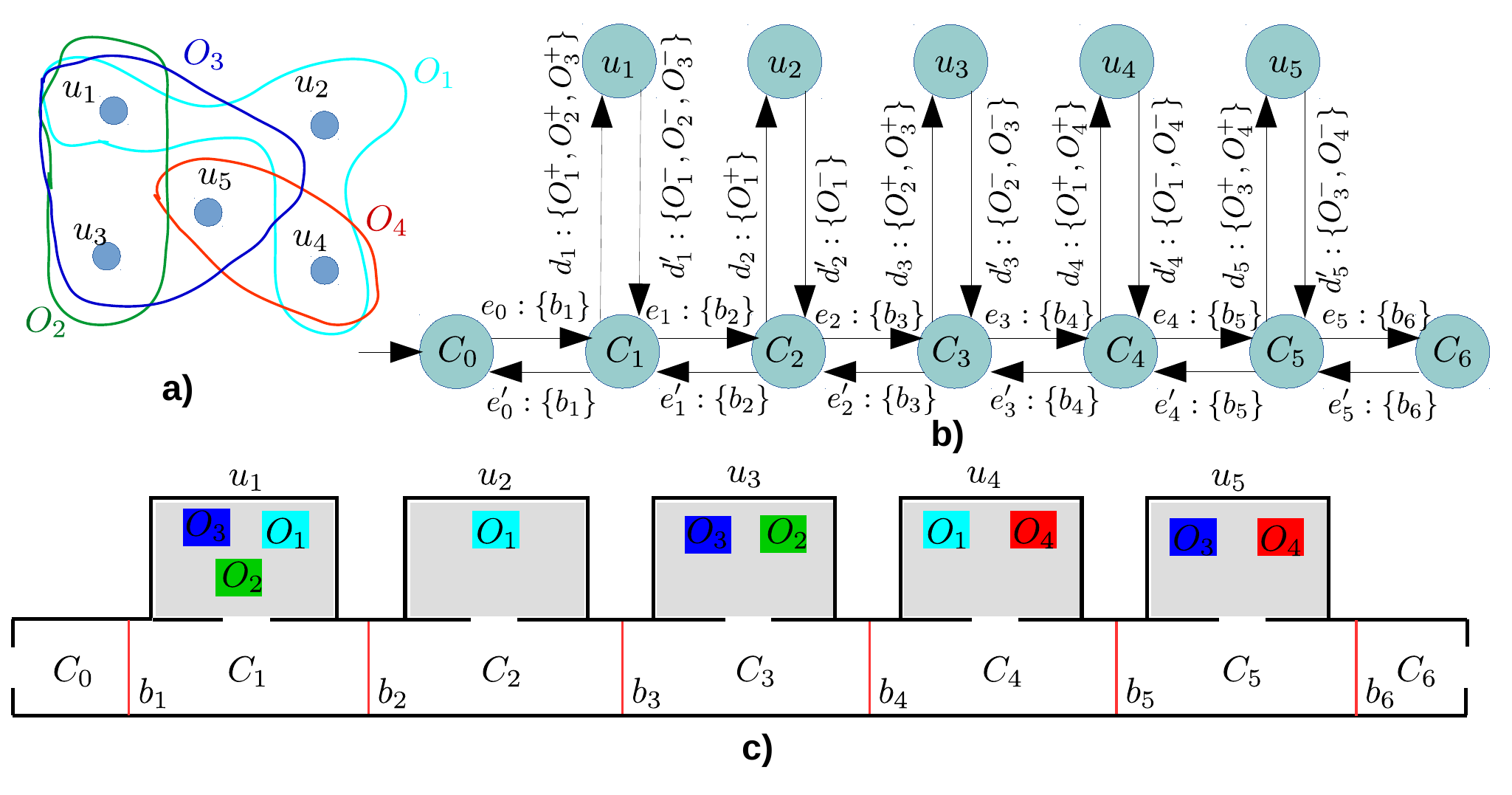}
  \caption{
    \textbf{a)} An instance of the set cover problem.
    \textbf{b)} The instance of the \MSSTVS problem for this set cover instance.
    \textbf{c)} A physical environment representing the \MSSTVS instance.
    }
  \vspace*{-10pt}
  \label{fig:reduction}
\end{figure}

                The construction clearly takes polynomial time, so it remains only to show that the reduction is correct, i.e. that the original SETCOVER-DEC instance has a set cover of size at most $k$ if and only if the constructed \MSSTVSDEC has a valid sensor set of size $k'$.

                 ($\Rightarrow$) Suppose there exists a set cover $\mathbf{N} \subseteq \mathbf{O}$ such that $|\mathbf{N}| \leq k$ and $\sum_{O \in \mathbf{N}} O = U$.
                 In this case, based on our discussion, the sensor selection $M = \mathbf{N} \cup \{b_1, b_2, \ldots, b_{n+1}\}$ is for itinerary $\mathcal{I}$, a certifying sensor selection of size $|M| = |\mathbf{N}|+n+1 \le k+n+1 = k'$.
                 
                 ($\Leftarrow$) Conversely, suppose there exists for $\mathcal{I}$, a certifying sensor selection $M \subseteq S$ for which $|M| \le k'$.  As argued above, because $M$ is certifying, it must contain each of the $n+1$ beam sensors.  Thus, there are at most $k'-(n+1) = k$ occupancy sensors in $M$.  Recall, however, that for this construction, every certifying sensor selection includes at least one occupancy sensor within each room.  Thus, the occupancy sensors in $M$ form a set cover of size at most $k$ for the original SETCOVER-DEC instance.
        \end{proof}

        Finally, the following results follow immediately from Theorem~\ref{thm:nphard} and Lemma~\ref{lem:NP}.
        
        \begin{theorem}
            \MSSTVSDEC is $\NP$-complete.
        \end{theorem}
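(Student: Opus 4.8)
The plan is essentially to assemble two results already in hand. Recall that, by definition, a decision problem is $\NP$-complete precisely when it lies in $\NP$ and every problem in $\NP$ reduces to it in polynomial time (equivalently, when it is both in $\NP$ and $\NP$-hard). Both halves of this conjunction have now been established for \MSSTVSDEC, so the proof is a one-line citation of those facts.

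Concretely, I would first invoke Lemma~\ref{lem:NP}, which gives \MSSTVSDEC $\in \NP$: a proposed sensor set $M$ serves as a polynomial-size certificate, and the fixed-point computation of the relation $R$ over the product automaton $\mathcal{P}_{\mathcal{G},\mathcal{I}}$ verifies in polynomial time both that $|M|\le k$ and (via Lemma~\ref{lem:prodAut}) that $M$ is certifying. Then I would invoke Theorem~\ref{thm:nphard}, whose reduction $f$ from SETCOVER-DEC witnesses that \MSSTVSDEC is $\NP$-hard. Since SETCOVER-DEC is $\NP$-complete~\cite{karp1972reducibility} and polynomial-time reductions compose, every language in $\NP$ reduces to \MSSTVSDEC.

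Combining the two, \MSSTVSDEC is in $\NP$ and is $\NP$-hard, hence $\NP$-complete, which completes the proof. There is no real obstacle here — the entire difficulty was front-loaded into Lemma~\ref{lem:NP} (the correctness and polynomial running time of the $R$-construction) and Theorem~\ref{thm:nphard} (the correctness of the set-cover reduction, in particular the argument that every certifying selection must contain all $n+1$ beam sensors and at least one occupancy sensor per room); this final theorem is purely a bookkeeping corollary of those.
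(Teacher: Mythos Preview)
Your proposal is correct and matches the paper's own argument exactly: the paper states that the theorem ``follow[s] immediately from Theorem~\ref{thm:nphard} and Lemma~\ref{lem:NP},'' which is precisely the two-part assembly you describe. Your additional commentary on where the real work lies is accurate but not needed for the proof itself.
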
 \label{thr:npcomplete}
        
        \begin{corollary} \label{colar:nphard}
          \MSSTVS is $\NP$-hard.
        \end{corollary}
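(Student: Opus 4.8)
The plan is to observe that the decision problem \MSSTVSDEC polynomial-time (Turing) reduces to the optimization problem \MSSTVS, so that the $\NP$-hardness established in Theorem~\ref{thm:nphard} transfers immediately to \MSSTVS. Concretely, suppose $A$ is any algorithm solving \MSSTVS. Given an arbitrary instance $\langle \mathcal{G}, \mathcal{I}, k \rangle$ of \MSSTVSDEC, I would simply run $A$ on $\langle \mathcal{G}, \mathcal{I} \rangle$ (the same inputs, with the integer bound discarded). If $A$ reports \textsc{Infeasible}, then no certifying sensor selection exists at all, hence none of size at most $k$ exists, and the \MSSTVSDEC instance is a $\No$ instance. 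Otherwise $A$ returns a certifying sensor selection $M$ of minimum cardinality; since some certifying selection of size at most $k$ exists if and only if a minimum-size certifying selection has size at most $k$, I answer $\Yes$ precisely when $|M| \le k$ and $\No$ otherwise.

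The key steps, in order, are: (i) note that the input of \MSSTVS is obtained from that of \MSSTVSDEC just by deleting $k$, so the instance transformation is trivially computable in polynomial time; (ii) make one call to $A$; (iii) post-process the answer in constant time --- either a single cardinality comparison $|M| \le k$, or recognition of the \textsc{Infeasible} output; and (iv) conclude that this procedure is a polynomial-time reduction from \MSSTVSDEC to \MSSTVS, which together with Theorem~\ref{thm:nphard} yields that \MSSTVS is $\NP$-hard.

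I do not expect a genuine obstacle here; the corollary is essentially bookkeeping. The only points deserving a sentence of care are the logical equivalence underlying step (iii) --- that ``there is a certifying selection of size $\le k$'' is equivalent to ``the minimum certifying selection has size $\le k$'' --- and the separate treatment of the \textsc{Infeasible} case, which corresponds exactly to those decision instances that are $\No$ for every choice of $k$. Once these are noted, the result follows at once.
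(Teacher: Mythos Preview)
Your proposal is correct and is essentially the same approach the paper takes: the paper does not spell out a proof at all, merely noting that the corollary follows immediately from Theorem~\ref{thm:nphard}, and your argument is the standard unpacking of that one-line remark via the obvious polynomial-time Turing reduction from \MSSTVSDEC to \MSSTVS.
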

        
        As a result, assuming $\P \neq \NP$, one cannot find a certifying sensor selection with minimum size in polynomial time.
\section{\MSSTVS via Integer Linear Programming}\label{sec:ilp} \pagebudget{1}
In this section, we present an exact solution to \MSSTVS using an Integer Linear Programming
formulation of the problem.  First, we cast the \MSSTVS problem into
mathematical programming form, and then linearize its constraints\gobble{ to form
an integer linear program}.

\begin{table*}
    \caption{Results of our experiment for the environment in Fig.~\ref{fig:department}, which shows the floor map of an environment.}
            \label{tab:caseStudy1}
            \centering    
            \vspace*{-10pt}
\begin{tabular}{ c l  l  l  l}
    \hline
       \textbf{} & \textbf{Itinerary} & \textbf{Description of itinerary} & \textbf{Computed solution} & \textbf{Comp. time (sec)} \\ \hline
        1 & $\walks(\mathcal{G})$ & All location sequences & $\emptyset$ & 2.7 \\ \hline
        2 & $E^* \setminus Walks(\mathcal{G})$ & No location sequence & $\emptyset$ & 2.3 \\ \hline
        3 & $\walks(\mathcal{G}) \setminus\{r \in E^* \mid e_{21} \notin r  \}$ & Do not enter $G$ & $\{o_3\}$ & 2.8 \\ \hline
        4 & $e_1 e_5 e_9 e_{15}$ & $R_1 A B R_2  D $ & \textsc{infeasible} & 2.5 \\ \hline
        5 & $e_1 e_5 e_9 e_{15}+e_1 e_5 e_7 e_{13}$ & $R_1 A B  R_2  D$ or $R_1 A B 
    R_1  C$ & $\{o_1, b_1, b_2, o_2, b_3\}$ & 2.9 \\ \hline
        6 & $e_{12}(e_{11}e_{12}+e_{20}e_{19})^* e_{20} e_{21}$ & $R_1 R_2 (R_1R_2+R_3R_2)^* R_3 G$ & $\{o_1, o_2, o_3, b_1\}$ & 4.1 \\ \hline
    \hline
\end{tabular}
\vspace*{-12pt}
\end{table*}


    
		


	\subsection{Mathematical programming formulation of \MSSTVS}
    For each sensor $s \in S$, we introduce a binary variable $u_s$, which receives value 1 if sensor $s$ is chosen to be turned on, and receives 0 otherwise.
    For each tuple of states $(q, p) \in Q_{\mathcal{P}} \times Q_{\mathcal{P}}$, we introduce a binary variable $a_{q,p}$, which
    receives value 1 if and only if there exist two finite words $r, r' \in E^*$, such that 
    $\delta_{\mathcal{P}}^*(q_0^{\mathcal{P}}, r) = q$,
    $\delta_{\mathcal{P}}^*(q_0^{\mathcal{P}}, r')= p$, and
    and $\beta_{\mathcal{G}}(r, M) = \beta_{\mathcal{G}}(r', M)$.
    For each edge $e \in E$ and event $e \in Y$, we introduce a binary variable $b_{e,y}$ which is assigned a value 1 if and only if the label of $e$ contains $y$ and the sensor that produces $y$ is chosen to be turned on. More precisely, if $y \in \lambda(e)$ and $u_{\eta(y)} = 1$, then $b_{e,y}$ receives value~1, otherwise it receives value 0, where $\eta(y)$ is the sensor that produces event $y$, i.e., $\eta(y)=s$ such that $y \in Y_s$.
    In terms of these variables, an \MSSTVS instance can be expressed as follows.
    
    \begin{mdframed}
\small
	\noindent Minimize:\vspace{-0.5em}
	\begin{equation}  \label{mp:obj}
		\sum_{s \in S} u_s
	\end{equation}
	Subject to:
	        
	\begin{enumerate}[]
	    \item \vspace{-\baselineskip}\begin{equation} \label{mp:init}
			    a_{q_0^{\mathcal{P}},q_0^{\mathcal{P}}} = 1  
		    \end{equation}  	
		\item For each $q \in F_{\mathcal{P}} , p \in Q_{\mathcal{P}} \setminus F_{\mathcal{P}}$,
	        \begin{equation} \label{mp:conflict1}
			    a_{q,p} = 0 
		    \end{equation}  
		\item For each $q \in Q_{\mathcal{P}} \setminus F_{\mathcal{P}}, p \in F_{\mathcal{P}}$,
	        \begin{equation} \label{mp:conflict2}
			    a_{q,p} = 0 
		    \end{equation}  		    
		\item For each $e \in E$ and $y \in Y$ s.t. $y \notin \lambda(e)$,
		    \begin{equation} \label{mp:edgeSensorNoInter}
		        b_{e,y} = 0
		    \end{equation}
        \item For each $e \in E$ and $y \in Y$ s.t. $y \in \lambda(e)$,
		    \begin{equation} \label{mp:edgeSensorInter}
		        b_{e,y} = u_{\eta(y)}
		    \end{equation}	
		\item For each $q, p \in Q_{\mathcal{P}}$ and $e, d \in E$ such that $\delta_{\mathcal{P}}(q, e) \neq \bot$ and $\delta_{\mathcal{P}}(p, d) \neq \bot$,
	        \begin{equation} \label{mp:edgesSameLabel}
			    a_{q,p} = 1  \wedge (\forall y \in Y, b_{e,y}=b_{d,y}) \Rightarrow a_{\delta_{\mathcal{P}}(q,e), \delta_{\mathcal{P}}(p,d)} = 1
		    \end{equation}  		    
		\item For each $q, p \in Q_{\mathcal{P}}$ and $e \in E$ such that $\delta_{\mathcal{P}}(q, e) \neq \bot$,
	        \begin{equation} \label{mp:edgeEpsionQ}
			    a_{q,p}=1 \text{ and } (\forall y \in Y, b_{e,y}=0) \Rightarrow  a_{\delta_{\mathcal{P}}(q,e),p}=1
		    \end{equation}  
		\item For each $q, p \in Q_{\mathcal{P}}$ and $e \in E$ such that $\delta_{\mathcal{P}}(p, e) \neq \bot$,
	        \begin{equation} \label{mp:edgeEpsionP}
			    a_{q,p}=1 \text{ and } (\forall y \in Y, b_{e,y}=0) \Rightarrow  a_{q,\delta_{\mathcal{P}}(p,e)}=1
		    \end{equation}  
	\end{enumerate}
\end{mdframed}%

	The objective~(\ref{mp:obj}) is to minimize the number of sensors turned on. 
	Constraint~(\ref{mp:init}) asserts that there exists a world-observation sequence, (the empty string, $\epsilon$) by which both states of the tuple $(q_0^{\mathcal{P}}, q_0^{\mathcal{P}})$ are reachable from the initial state.
	Constraint Sets~(\ref{mp:conflict1}) and~(\ref{mp:conflict2}) ensure that the sensor selection is certifying.
	Constraint Sets~(\ref{mp:edgeSensorNoInter}) and~(\ref{mp:edgeSensorInter}) encode which sensors affect the label of each edge.
	Constraint Set~(\ref{mp:edgesSameLabel}) asserts that if two states $q$ and $p$ are both reachable by a world-observation sequence, then for any edges $e$ and $d$, if those two edges receive the same world-observation under the chosen sensors, then it means states $\delta_{\mathcal{P}}(q, e)$ and $\delta_{\mathcal{P}}(p, d)$ are also reachable by at least one sequence of world-observations under the chosen sensors.
	In fact, these constraints implement the case shown in Fig.~\ref{fig:mp_edge_constraints}a.
	Similarly, Constraint Sets~(\ref{mp:edgeEpsionQ}) and~(\ref{mp:edgeEpsionP}) implement the case in Fig.~\ref{fig:mp_edge_constraints}b. 

	This formulation would be a $0$--$1$ integer linear programming model if Constraint Sets~(\ref{mp:edgesSameLabel}),~(\ref{mp:edgeEpsionQ}), and~(\ref{mp:edgeEpsionP}) were linear.
	Thus, the next section shows how to linearize them.
	%
	%
	\subsection{Integer linear programming formulation of \MSSTVS}
	To linearize Constraint Set~(\ref{mp:edgesSameLabel}), first we introduce a binary variable $j_{e,d,y}$ for each $e, d \in E$ and $y \in Y$, which receives its value from the following constraints.
	    \begin{mdframed}\small
        \begin{enumerate}[]
            \item For all $e,d \in E$ and all $y \in Y$,
                \begin{eqnarray}
                    \label{lp:b_ed_leq_j}b_{e,y}-b_{d,y} \leq j_{e,d,y},\\
                    \label{lp:b_de_leq_j} b_{d,y}-b_{e,y} \leq j_{e,d,y},\\
                    \label{lp:j_leq_b_ed} j_{e,d,y} \leq b_{e,y}+b_{d,y}, \text{ and} \\
                    \label{lp:j_leq_2_be_bd} j_{e,d,y} \leq 2-b_{e,y}-b_{d,y}.
                \end{eqnarray}
        \end{enumerate}
    \end{mdframed}
    These linear constraints assign value 0 to $j_{e,d,y}$ if $b_{e,y}=b_{d,y}$; otherwise, they assign value 1 to $j_{e,d,y}$.
    Hence, Constraint Set~(\ref{mp:edgesSameLabel}) is replaced by the following linear constraints.
		    \begin{mdframed}\small
        \begin{enumerate}[]
            \item For each $q, p \in Q_{\mathcal{P}}$ and $e, d \in E$ s.t. $\delta_{\mathcal{P}}(q, e) \neq \bot$ and $\delta_{\mathcal{P}}(p, d) \neq \bot$,
                \begin{equation}\label{lp:edgesSameLabel}
                    (1-a_{q,p})+\sum_{y \in Y}(j_{e,d,y})+a_{\delta_{\mathcal{P}}(q,e),\delta_{\mathcal{P}}(p,d)} \geq 1.
                \end{equation} 
        \end{enumerate}
    \end{mdframed}
    
    \noindent We also replace Constraint Set~(\ref{mp:edgeEpsionQ}) by linearized forms:
 	\begin{mdframed}\small
        \begin{enumerate}[]
            \item For each $q, p \in Q_{\mathcal{P}}$ and $e \in E$ s.t. $\delta_{\mathcal{P}}(q, e) \neq \bot$,
                \begin{eqnarray} \label{lp:edgeEpsionQ}
                     (1-a_{q,p})+\sum_{y \in Y}b_{e,y} + a_{\delta_{\mathcal{P}(q,e)},p} \geq 1.
                \end{eqnarray}
        \end{enumerate}
    \end{mdframed}   
    
    \noindent Similarly, we linearize Constraint Set~(\ref{mp:edgeEpsionP}) as follows.
    	\begin{mdframed}\small
        \begin{enumerate}[]
            \item For each $q, p \in Q_{\mathcal{P}}$ and $e \in E$ s.t. $\delta_{\mathcal{P}}(p, e) \neq \bot$,
                \begin{eqnarray} \label{lp:edgeEpsionP}
                     (1-a_{q,p})+\sum_{y \in Y} b_{e,y} + a_{q,\delta_{\mathcal{P}}(p,e)} \geq 1.
                \end{eqnarray}
        \end{enumerate}
    \end{mdframed}   
    
    Now, we have an ILP formulation of \MSSTVS, which can be solved directly by any of the many existing highly-optimized ILP solvers.
    This ILP formulation not only can be used for obtaining solutions to \MSSTVS but also to compute feasible (rather than optimal) solutions for large instances of world graphs for which exact solutions to \MSSTVS cannot be computed in a reasonable amount of time.
    
    \section{Case studies} \label{sec:caseStudies}
    In this section, we present case studies, using the ILP formulation from the previous section to solve some representative instances of \MSSTVS.
    All trials were performed on an Ubuntu 16.04 computer with a 3.6~GHz processor.

    \subsection{Case study 1: Computer Science Department}
    Recall Fig.~\ref{fig:department}, with a map of a small computer science department.
    For this world graph, we executed several instances of the \MSSTVS with different itineraries to verify the algorithm's correctness. 
    Table~\ref{tab:caseStudy1} shows results on those instances.
    The first two instances consider\gobble{ two} extreme itineraries as boundary test cases.
    For the first, the itinerary consists of all walks on the world graph, including the empty string; in the second instance, the itinerary 
    does not have any walks over the world graph.
    In both of these instances, the optimal solution (which has size 0, i.e. no sensor is needed) was correctly found.
    The third scenario considers an itinerary moving any way, other than entering room $G$.
    To certify this itinerary, it suffices to turn on only one of the sensors $o_3$ and $o_4$, both located in room $G$.  Again, our implementation found this solution correctly.
    The fourth itinerary specifies a single sequence in which the agent enters room $A$ from $R_1$, then it enters room $B$ from the right door between $A$ and $B$, then passes through $R_2$ to enter $D$.
    There is no certifying sensor selection for this itinerary because there is another walk, $e_1 e_5 e_7 e_{13}$, that is not within the claimed itinerary but which produces the same sequence of world-observations, 
    for any selection of sensors.
    In contrast, the fifth scenario, whose itinerary includes both the walk from the fourth scenario
    and additionally $e_1 e_5 e_7 e_{13}$, can be solved with no need to turn on sensors $o_3$ and $o_4$.
    The last itinerary specifies all walks in which the agent does not enter any room but only room $G$ at the end after traveling along the corridor between any of regions $R_1$, $R_2$ and $R_3$.
    For this itinerary, it is required to turn on sensors $o_1$, $o_2$ and $b$ to verify that the robot did not enter any of rooms $A$, $B$, $C$, $D$, $F$. It also requires either $o_3$ or $o_4$ be turned on to ensure the robot enters room $G$ at the end.
    Our program took less than 5 seconds to compute a minimal certifying sensor selection for each of these itineraries.
    
    Though small, this case study \altered{suggests} the correctness of our algorithm.
    The next section tests its scalability.
%
    
%
    \subsection{Case study 2: Where eagles soar}
Ornithologists employ cellular-network devices to track migratory patterns of larger birds, allowing new insights to be gleaned~\cite{meyburg2003migration}.
  Fig.~\ref{fig:caspiansea}a shows aggregated tracking data (from~\cite{eagledata}) for journeys made by eagles over a year.
  The surprisingly infrequent flights across open water might lead one to hypothesize that eagles circle the Caspian Sea (the region made visually salient in the figure). To validate this
  hypothesis would require purchasing data roaming capabilities from cellular-network operators across multiple countries in this region. 
  Fig.~\ref{fig:caspiansea}b shows an approach to model the problem of minimizing these
  costs. The map is divided into subregions, each representing a vertex of a world graph. Edges of the world graph are between adjacent hexagons. When whole
  subregions fall substantially within a single country, they have been
  assigned a color representing the potential of purchasing data service for that
  country.  There are 10 colors, representing the sensor set $S$. We model the
  hypothesis of circling the Caspian Sea by an itinerary containing walks that visit III$\cdot$II$\cdot$I$\cdot$III, 
  III$\cdot$I$\cdot$II$\cdot$III, or their extra cyclic permutations.  The DFA describing this itinerary consists of 7 states.

    The observations provided by the cellular network are akin to the occupancy sensors in the previous example, with events triggered when the eagle enters or leaves each hexagonal cell.
    %
    %
    Our decomposition, shown in Fig.~\ref{fig:caspiansea}b, has 36 colored cells. Accordingly, there are 72 events.  An additional 9 cells are uncolored.
    %
    %
    %
    \altered{It took 734.83 seconds for our implementation to form the ILP model and then it took 270.17 seconds to find an optimal solution.}
    In this solution, only 6 out of the 10 sensors (the colors listed in the caption of Fig.~\ref{fig:caspiansea}) were turned on.    
    \altered{Before finding an optimal solution, the program found feasible solutions of sizes 10, 9, and 8 respectively in 83.07, 90.91, and 176.13 seconds.}
    

  %
  
\section{Conclusion}
\vspace{-0.05in}
We have examined the question of selecting the fewest sensors subject to the
requirement that they have adequate distinguishing power to differentiate
motions conforming to an itinerary from those that do not.  This optimization
question fits the resource minimization concern that underlies several useful
applications.  Our formulation of this problem allows for the possibility that
when a sensor is selected, it can provide readings for events in potentially
multiple places.  
To solve the problem, rather than proposing a custom algorithm,
we give an ILP formulation for it, leveraging decades of
optimization on such solvers.  This approach is seen to solve instances of
moderate size\,---\,including a small-scale case study motivated by wildlife
tracking.  

For future work, the steps that have become standard when dealing with
$\NP$-hard problems remain: seeking special-cases that possess some
additional structure making them easier, understanding the problem
using more nuanced parameterization (i.e., fixed parameter tractability
approaches), and custom heuristics and approximation algorithms. 
\textcolor{black}{Also, there might be room to improve the ILP to make it more effective.}

    \begin{figure}[t]
  \centering
  \includegraphics[width=0.475\linewidth]{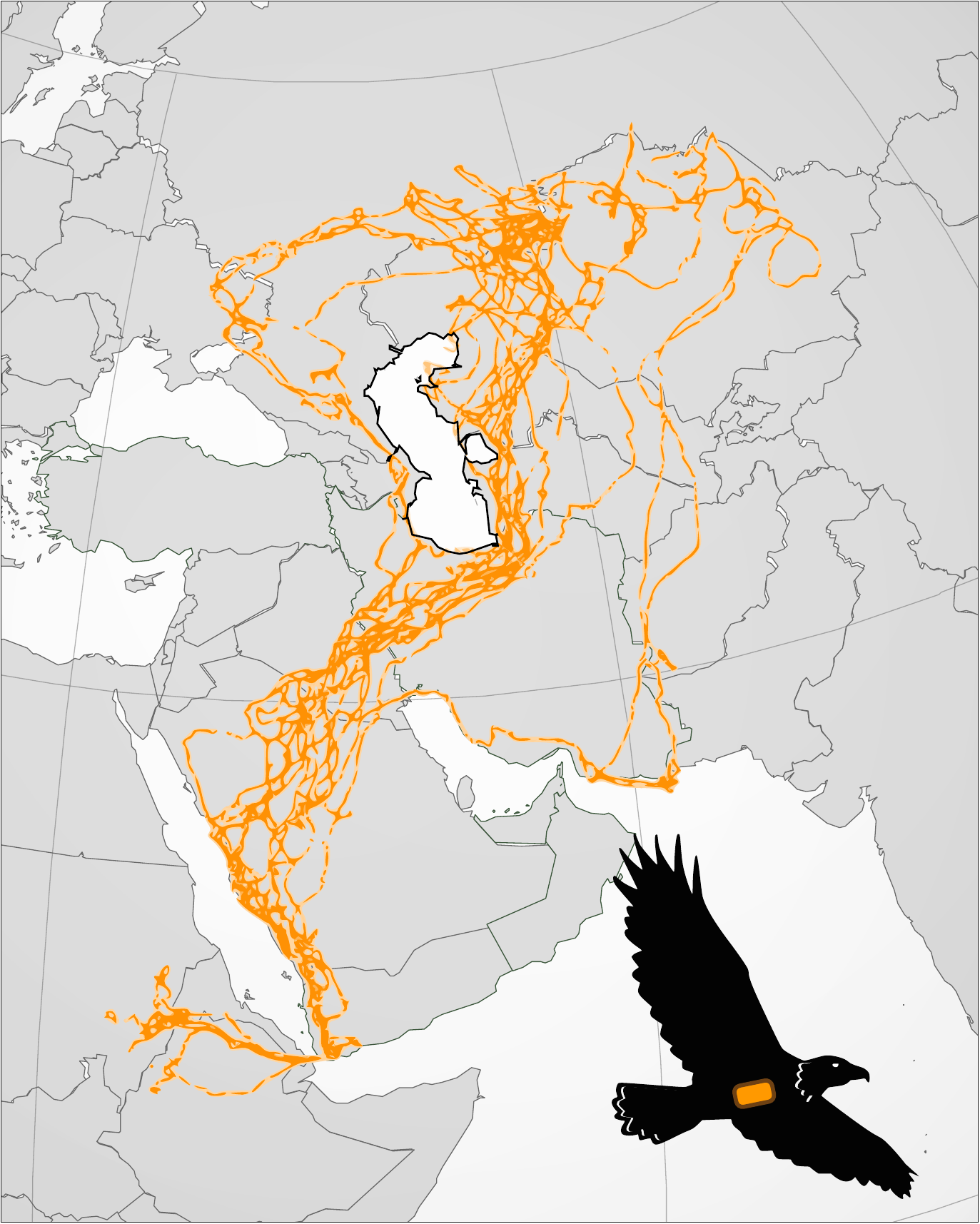}
  \hfill
  \includegraphics[width=0.475\linewidth]{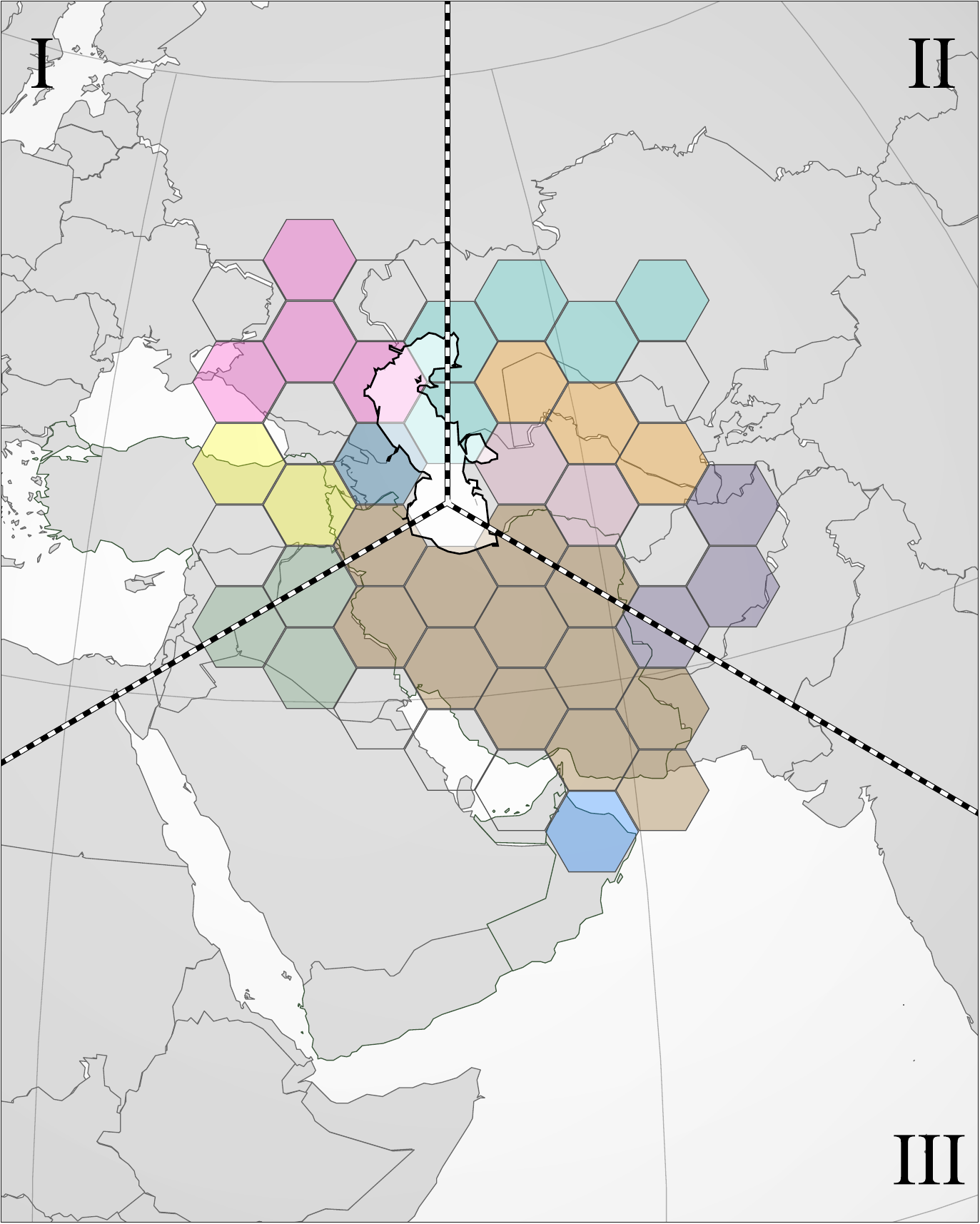}\\
  \vspace*{-4pt}
\textbf{a)} \hspace*{3.7cm} \textbf{b)}\\
  \vspace*{-4pt}
  \caption{\textbf{a)} An aggregate of tracking data, reproduced from~\cite{eagledata}, showing the journeys across parts of North Africa and Central Asia made by eagles over a period of one year.
  Fig. \textbf{b)} The decomposition of the map into subregions.
  Subregions who fall substantially within a single country have been
  assigned a color representing the potential of purchasing data roaming service from that
  country (there are 10 colors, representing sensors set $S$). 
  %
  The optimal sensor selection to certify the hypothesis of circling the
  Caspian Sea has six sensors, consisting of the sets of hexagons of the following colors:
    \includegraphics[scale=0.175]{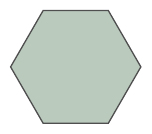},
    \includegraphics[scale=0.175]{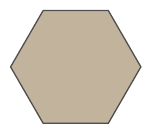},
    \includegraphics[scale=0.175]{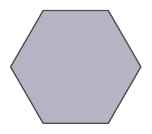},
    \includegraphics[scale=0.175]{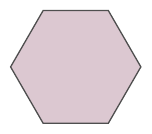},
    \includegraphics[scale=0.175]{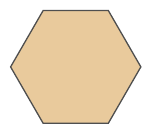}, and
    \includegraphics[scale=0.175]{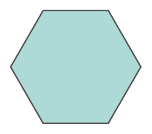}.
  \label{fig:caspiansea}}
  \vspace*{-13pt}
  \end{figure}
\vspace{-0.1in}

\pagebudget{1}
\bibliography{main}

\showtotalpagebudget

\end{document}